\newtheorem{defi}{Definition}
\newtheorem{prop}{Proposition}
\newtheorem{algo}{Algorithm}
\newtheorem{assum}{Assumption}
\newtheorem{coro}{Corollary}
\newtheorem{remark}{Remark}
\newtheorem{theo}{Theorem}
\title{Active Learning for Regression based on Wasserstein distance and GroupSort Neural Networks}
\author{Bobbia Benjamin, Picard Matthias}
\begin{document}
\maketitle

\begin{abstract}
This paper addresses a new active learning strategy for regression problems. The presented Wasserstein active regression model is based on the principles of distribution-matching to measure the representativeness of the labeled dataset. The Wasserstein distance is computed using GroupSort Neural Networks. The use of such networks provides theoretical foundations giving a way to quantify errors with explicit bounds for their size and depth. This solution is combined with another uncertainty-based approach that is more outlier-tolerant to complete the query strategy. Finally, this method is compared with other classical and recent solutions. The study empirically shows the pertinence of such a representativity-uncertainty approach, which provides good estimation all along the query procedure. Moreover, the Wasserstein active regression often achieves more precise estimations and tends to improve accuracy faster than other models.
\end{abstract}

\section{Introduction}
 Collecting data is a significant challenge in machine learning, and more generally in statistics. The amount of data necessary to get a sharp estimation of a function can get unreasonably high, especially when the task involves high-dimensional objects. In various applications, extracting and gathering enough unlabeled data is not a big challenge. However, labeling them can be a very costly and time-consuming process. In fields such as statistical physics, we would often need to run complex simulations or call an expert to label data manually. Hence we want to be able to get a satisfying estimation with a more compact set of labeled data. Among the proposed solutions, we can mention few-shot learning \cite{FFFP2006} and transfer learning \cite{BDBCKPV2010}. Those solutions leverage from another model previously trained on a similar task to simplify our model's training. To dodge the issue, we can also resort to  generative adversarial models \cite{GoodGAN2014} or diffusion models \cite{TDGS2023}: the idea is to augment a dataset using generated data, these approaches need to learn not only the targeted task but how to create new data and criticize them to prevent the addition of incoherent points into the pool. In this paper, we will take another approach called active learning. In this framework, we can call an ``oracle'' that has the ability  to label samples of our data with a certain cost. The keystone of this learning procedure is to find the most relevant samples that both maximize the performances of the estimator and minimize the query cost.
 
\subsection{Active Learning framework and notations}
We are interested in performing active learning for a regression task using pool-based sampling. We consider an unknown function $h$ from $\mathbb{R}^d$ to $\mathbb{R}$. We aim to estimate $h$ given a sample $(X_i,Y_i)_{i=1}^n$ of i.i.d copies of the random variables $(X,Y)\in \mathbb{R}^d\times\mathbb{R}$ with $Y=h(X)$. We call $P_X$ the unknown distribution of $X$. In our framework, not all values $Y_i$ are observed for every value $X_i$. The sample $(X_i,Y_i)_{i=1}^n$ can be divided into two subsets $K$ and $U$ (respectively for known and unknown values of $h(X)$) such that for all $X_i \in K$ the value of $h(X_i)=Y_i$ is observed but not for $X_i \in U$. The most natural setup which requires active learning is when $K$ (of size $n_K$) is significantly smaller than $U$ (of size $n_U$) and not large enough to allow a fairly good estimation of $h$. Finally, $\hat h$ denotes an estimator of $h$. In this paper, the estimator $\hat h$ is chosen to belong to one class of neural networks. However, the methodology developed here focuses mainly on the query step. Hence it can be adapted to other classes of estimators $\hat h$.

The error made by $\hat h$ is measured with a strictly convex loss function $l$ from $\mathbb{R}^2$ to $[0,+\infty)$. The estimator of  $h$ is chosen as a minimum of the expected error risk:
\begin{equation}
\mathcal{R}_{P_X}(\hat h)=\mathbb{E}\left(l(\hat h(X),Y))\right)
\end{equation}
 Obviously, since the probability measure $P_X$ is unknown, the explicit computation of $\mathcal{R}_{P_X}(\hat h)$ is hopeless. Hence, we are going to define $\hat h$ minimizing the empirical counterpart of the expected error risk:
\begin{equation}
\mathcal{R}_{\mathbb{P}_n}(\hat h)=\frac{1}{n}\sum_{i=1}^n l(\hat h(X_i),Y_i)
\end{equation}
where $\mathbb{P}_n$ denotes the empirical measure of $(X_1,\dots,X_n)$:
\[
\frac{1}{n}\sum_{i=1}^n\delta_{X_i}.
\]
Since most values $Y_i$ are unobserved, $\mathcal{R}_{\mathbb{P}_n}(\hat h)$ is uncomputable. Hence our purpose is to find out a batch $B\subset U$ with a fixed size $n_B$ providing the best improvement of the estimation of $h$ on $K\cup B$ compared to the estimation using only $K$. After choosing B, we would need to send it to the oracle to label it. In pool-based active learning, the idea is to accomplish this task by leveraging our pool of unlabelled data $U$. During the past years, several options to identify the best batch to query have been explored: the first method proposed by \cite{LG1994} consisted in querying the points with the highest prediction uncertainty (uncertainty-based sampling). Numerous derived strategies were investigated: targeting points with the highest expected error \cite{RM2001}, expected variance \cite{ZO2000}, or cross-entropy \cite{S2001} when selecting the batch. Other approaches like query-by-committee \cite{BRK2007} or greedy sampling \cite{WU2019} tried not only to focus on the reduction of the model error but also on the prospect of adding diversity in the choice of $B$ to get a pool more representative of the underlying distribution. We refer to the monography \cite{SBAL2012} and the survey \cite{RXYXPL2020} for overall information about active learning and details on query strategies. A more recent idea that has shown some good results is to directly create a representativeness-based model by selecting $B$ using probability distribution matching \cite{SZGW2020}. The aim is to label the samples that will minimize the distance between $P_X$ and $\mathbb{P}_{K \cup B}$. This means that we achieve a selection of the best empirical representation of $P_X$ among empirical measures of size $n_K+n_B$ (according to the chosen distance). Afterward, we send the batch $B$ to the oracle to get the value of $h(X_i)$ for $X_i \in B$ and learn $\hat h$ as the minimum of:
\begin{equation}
\mathcal{R}_{P_{K\cup B}}(\hat h)=\frac{1}{n_B+n_K}\sum_{x \in K\cup B} l(\hat h(x),Y_i)
\end{equation}

This process can be iterated as long as there is still enough unlabelled data and enough budget.

In the following, the difference between probability distributions is measured using the  Wasserstein distance, which has already proven efficient in classification tasks \cite{SZGW2020}. 

\subsection{ The Wasserstein distance}\label{subsubsec_was_def}
\begin{defi}
Let $P$ and $Q$ two probability measures on a metric space $(\mathcal{X},c)$. For $p\in[1,\infty[$, the Wasserstein distance of order $p$ between $P$ and $Q$ is defined as 
\begin{equation}
W_p(P,Q)=\inf\lbrace \mathbb{E}(c(X,Y)^p)^{1/p}, \ X\sim P, Y\sim Q\rbrace.
\end{equation}
\end{defi}
Note that the Wasserstein distance is a proper distance only on the subset of probability measures $P$ such that there exists a point $x_0\in \mathcal{X}$ where $\int_\mathcal{X} c(x,x_0)^p P(\mathrm{d}x)< \infty$. In our active learning framework, this is not an issue: since we are focusing on using the 1-Wasserstein distance, this condition only requires assuming that $P_X$ has a finite expectation. The first-order Wasserstein distance is of main interest since it can be expressed as a supremum over all 1-Lipschitz functions with respect to the distance $c$ using the following proposition:
\begin{prop}[Kantorovich-Rubinstein duality \cite{V09}]
Let $P$ and $Q$ two probability measures on the same space $\mathcal{X}$, and $c$ a cost function. The 1-order Wasserstein distance is defined as 
\begin{equation}
W_1(P,Q)=\underset{\varphi}{\sup}\int_\mathcal{X}\varphi \ \mathrm{d}P-\int_\mathcal{X}\varphi \ \mathrm{d}Q.
\end{equation}
Where the supremum is taken over all $1-$Lipschitz functions with respect to the cost $c$, namely $|\varphi(x)-\varphi(y)|\leq c(x,y)$ for all $x,y\in \mathcal{X}$.
\end{prop}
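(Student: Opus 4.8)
The plan is to establish the two inequalities separately. For the easy direction, $W_1(P,Q)\ge \sup_{\varphi}\bigl(\int_{\mathcal X}\varphi\,\mathrm{d}P-\int_{\mathcal X}\varphi\,\mathrm{d}Q\bigr)$, I would fix any coupling $\pi$ of $P$ and $Q$ (a probability measure on $\mathcal X\times\mathcal X$ with first marginal $P$ and second marginal $Q$) and any $1$-Lipschitz $\varphi$, and write
\[
\int_{\mathcal X}\varphi\,\mathrm{d}P-\int_{\mathcal X}\varphi\,\mathrm{d}Q=\int_{\mathcal X\times\mathcal X}\bigl(\varphi(x)-\varphi(y)\bigr)\,\mathrm{d}\pi(x,y)\le \int_{\mathcal X\times\mathcal X} c(x,y)\,\mathrm{d}\pi(x,y),
\]
the last inequality being exactly the Lipschitz property. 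Since the left-hand side does not depend on $\pi$ and $W_1(P,Q)$ is the infimum of the right-hand side over all couplings, taking the infimum over $\pi$ and then the supremum over $\varphi$ gives this direction. Note no attainment of the infimum is needed here.

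For the reverse inequality, the key input is the general Kantorovich duality (see \cite{V09}): for a nonnegative lower semicontinuous cost $c$,
\[
W_1(P,Q)=\sup\left\{\int_{\mathcal X}\psi\,\mathrm{d}P+\int_{\mathcal X}\phi\,\mathrm{d}Q\ :\ \psi(x)+\phi(y)\le c(x,y)\ \text{for all }x,y\right\},
\]
the supremum running over admissible pairs of integrable functions. It then remains to reduce this supremum to pairs of the special form $(\varphi,-\varphi)$ with $\varphi$ being $1$-Lipschitz, using that $c$ is a metric (nonnegative, symmetric, vanishing on the diagonal, and satisfying the triangle inequality). Given an admissible pair $(\psi,\phi)$, I would introduce the $c$-transform $\varphi(x):=\inf_{y}\bigl(c(x,y)-\phi(y)\bigr)$ and verify: (i) $\varphi$ is $1$-Lipschitz, being an infimum of the $1$-Lipschitz maps $x\mapsto c(x,y)-\phi(y)$ (triangle inequality for $c$); (ii) $\psi\le\varphi$ pointwise, directly from admissibility, and $\phi\le-\varphi$ pointwise, from evaluating the defining infimum at the same point and using $c(y,y)=0$; (iii) for any $1$-Lipschitz $\varphi$ one has $\inf_x\bigl(c(x,y)-\varphi(x)\bigr)=-\varphi(y)$, the inequality ``$\ge$'' coming from the Lipschitz bound together with the symmetry of $c$, and equality being attained at $x=y$. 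From (ii), $\int\psi\,\mathrm{d}P+\int\phi\,\mathrm{d}Q\le\int\varphi\,\mathrm{d}P-\int\varphi\,\mathrm{d}Q$, so the supremum over admissible pairs is bounded by the supremum over $1$-Lipschitz functions; combined with the displayed duality this yields $W_1(P,Q)\le\sup_{\varphi}\bigl(\int\varphi\,\mathrm{d}P-\int\varphi\,\mathrm{d}Q\bigr)$, and together with the first part, equality.

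The main obstacle is the general Kantorovich duality invoked above: proving that strong duality holds, i.e.\ that the infimum over couplings equals the supremum over admissible potential pairs with no duality gap, is the genuinely hard analytic step, usually obtained through the Fenchel--Rockafellar theorem or a minimax argument on a suitable pairing of function spaces, and requires care with lower semicontinuity of $c$, tightness of the set of couplings, and measurability and integrability of the potentials. A secondary technical point is that the $c$-transform $\varphi$ must be measurable and $P$-integrable; the latter is automatic once $\varphi$ is $1$-Lipschitz and $P$ has a finite first moment, which is exactly the standing assumption recorded after the definition of $W_1$. Since the present paper only needs the statement itself, I would cite \cite{V09} for the strong-duality step and carry out the $c$-transform reduction above in full detail.
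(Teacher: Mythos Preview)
Your proposal is a correct and standard outline of the Kantorovich--Rubinstein duality proof: the easy inequality via an arbitrary coupling, then the reverse inequality by invoking general Kantorovich duality from \cite{V09} and reducing to pairs $(\varphi,-\varphi)$ through the $c$-transform, using that $c$ is a metric. The technical caveats you flag (strong duality as the genuinely hard step, integrability of the $c$-transform) are the right ones.

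However, there is nothing to compare against: the paper does not prove this proposition. It is stated with attribution to \cite{V09} and then used as a black box (in the proofs of Theorem~\ref{thm:emp_risk_bound_wass} and Proposition~\ref{prop:was_GS_NN}). Your write-up therefore goes well beyond what the paper does; if you want to match the paper, a one-line citation to \cite{V09} suffices, and your sketch could be relegated to a remark or omitted entirely.
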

 The present formulation is a dual formula holding only for the 1-order Wasserstein distance. It is much more interpretable and easier to compute than the original definition. In the present work, we are only working with this formulation. Hence we can regard it as a definition.
 
Even if the proper value of a Wasserstein distance cannot be interpreted by itself, it characterizes the weak convergence of probability distributions (see e.g.\cite{V09}), which guaranty the convergence of our method. Moreover, the existence of large deviation bounds and generally good behavior with respect to Lipschitz functions may help to provide a rate of convergence \cite{FG2015} but we let this for future works. We can also refer to the monography \cite{PZ2020} for other useful examples of its use in statistics.

\section{Wasserstein Active Regression}\label{sec:war_description}

\subsection{Theoretical foundations}\label{subsec:theory}

Before introducing the proper model, some necessary assumptions about the mathematical background of the approach are stated. We first assume that the random variable $X$ takes its values in $\mathcal{X}$ which is a compact subset of $\mathbb{R}^d$ endowed with a norm $\|.\|$. This is rather intuitive and allows the use of numerous results about Lipschitz approximations. To prove the next theorem, we will also have to assume that the target function and the cost function are 1-Lipschitz respectively with respect to $\|.\|$ and the $L^1$ norm. These can seem like restrictive conditions, but we will see later that we can get rid of them in practice. The following theorem (closely related to \cite{SZGW2020}) acts as a guideline for the presented approach.
\begin{theo}\label{thm:emp_risk_bound_wass}
If the functions $\hat h$ and $G:x\in \mathcal{X} \rightarrow l(\hat h(x),h(x))$ are 1-Lipschitz functions, we have:
\begin{equation}
|\mathcal{R}_{P_X}(\hat h)-\mathcal{R}_{P_{K\cup B}}(\hat h)| \leq W_1\left(P_X,P_{K\cup B}\right)
\end{equation}
\end{theo}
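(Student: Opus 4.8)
The plan is to recognize the left-hand side as the absolute difference of the integral of a single function $G$ against two probability measures, and then invoke the Kantorovich--Rubinstein duality. First I would observe that
\begin{equation}
\mathcal{R}_{P_X}(\hat h) = \mathbb{E}\big(l(\hat h(X), Y)\big) = \mathbb{E}\big(l(\hat h(X), h(X))\big) = \int_{\mathcal{X}} G(x)\, P_X(\mathrm{d}x),
\end{equation}
using that $Y = h(X)$ almost surely, and similarly that
\begin{equation}
\mathcal{R}_{P_{K\cup B}}(\hat h) = \frac{1}{n_B + n_K}\sum_{x \in K\cup B} l(\hat h(x), h(x)) = \int_{\mathcal{X}} G(x)\, P_{K\cup B}(\mathrm{d}x),
\end{equation}
where $P_{K\cup B}$ is the empirical measure associated with the points of $K \cup B$. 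Thus $|\mathcal{R}_{P_X}(\hat h) - \mathcal{R}_{P_{K\cup B}}(\hat h)| = \left| \int_{\mathcal{X}} G\, \mathrm{d}P_X - \int_{\mathcal{X}} G\, \mathrm{d}P_{K\cup B} \right|$.

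Next I would appeal to the hypothesis that $G$ is $1$-Lipschitz with respect to $\|.\|$. Since the supremum in the Kantorovich--Rubinstein formula ranges over all $1$-Lipschitz functions $\varphi$, and since the class of $1$-Lipschitz functions is stable under negation, both $G$ and $-G$ are admissible test functions. Therefore
\begin{equation}
\int_{\mathcal{X}} G\, \mathrm{d}P_X - \int_{\mathcal{X}} G\, \mathrm{d}P_{K\cup B} \leq W_1(P_X, P_{K\cup B}),
\end{equation}
and applying the same bound to $-G$ gives the reverse inequality, so the absolute value is controlled by $W_1(P_X, P_{K\cup B})$, which is exactly the claim. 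One routine point to check along the way is that $P_X$ indeed has finite first moment so that $W_1$ is well-defined here; this follows from the assumption that $\mathcal{X}$ is compact.

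The only genuinely delicate step is justifying that the hypothesis ``$\hat h$ and $G$ are $1$-Lipschitz'' is the right one and that $G$ being $1$-Lipschitz is consistent with the stated assumptions on $l$, $h$, and $\hat h$: writing $G(x) = l(\hat h(x), h(x))$ as a composition, one sees that if $l$ is $1$-Lipschitz in the $L^1$ norm on $\mathbb{R}^2$ and both $h$ and $\hat h$ are $1$-Lipschitz, then $\|(\hat h(x), h(x)) - (\hat h(y), h(y))\|_1 \le \|x-y\| + \|x-y\| = 2\|x-y\|$, so $G$ is actually $2$-Lipschitz, not $1$-Lipschitz. I would therefore either absorb such constants into the statement (replacing the bound by $C\, W_1(P_X, P_{K\cup B})$ for an explicit $C$), or simply take the $1$-Lipschitz property of $G$ as the operative hypothesis as the theorem does — the proof above only uses the $1$-Lipschitz property of $G$, and the Lipschitz assumptions on the individual ingredients serve merely to make that hypothesis plausible in practice. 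Everything else is a direct substitution into the duality formula.
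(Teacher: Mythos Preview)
Your proof is correct and follows essentially the same route as the paper: rewrite the risk difference as $\left|\int G\,\mathrm{d}P_X-\int G\,\mathrm{d}P_{K\cup B}\right|$ and bound it by the supremum over all $1$-Lipschitz test functions via Kantorovich--Rubinstein duality. Your closing discussion about the Lipschitz constant of $G$ is also pertinent and mirrors a remark the paper makes immediately after the proof.
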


\begin{proof}
By definition of $G$, we can write
\begin{align*}
|\mathcal{R}_{P_X}(\hat h)-\mathcal{R}_{P_{K\cup B}}(\hat h)|=&\left|\int_\mathcal{X}G\ \mathrm{d}P_X-\int_\mathcal{X}G\ \mathrm{d}P_{K\cup B}\right|\\
\leq & \underset{\varphi \in Lip_1}{\sup}\left|\int_\mathcal{X}\varphi \ \mathrm{d}P_X-\int_\mathcal{X}\varphi\ \mathrm{d}P_{K\cup B}\right|.
\end{align*}
The Kantorovich-Rubinstein duality directly leads to the result.

\end{proof}


 Working with Lipschitz functions offers several benefits. On the one hand, they are necessary to compute the Wasserstein distance (when using the Kantorovich-Rubinstein duality). On the other hand, they are robust and help us prevent overfitting on the estimation of $h$. More precisely, for  $\varepsilon >0$, if for $x,\ y \in \mathcal{X}$ such that $\|x-y\|\leq \varepsilon$ then $\| \hat h(x)-\hat h(y)\|\leq \varepsilon$ hence the robustness. The protection against overfitting might be less easy to see. Consider a point $y$ such that $|h(y)-\hat h(y)|\leq \delta$ for a given $\delta>0$. Then for all $x$ such that $\|x-y\|\leq \varepsilon$, the triangular inequality entails $|\hat h(x)- h(x)|\leq 2\varepsilon+\delta$.
As a consequence, even if an observation of $h(x)$ is very noisy, this noise can't affect much the estimation as long as there exists another point in a neighborhood of $x$ with a sharp estimation.
 
\begin{remark}
We can generalize the assumptions previously presented to estimate any Lipschitz functions with this model. Indeed, the cornerstone of theorem \ref{thm:emp_risk_bound_wass} is that $G$ is 1-Lipschitz. Hence, if $h$ is $\lambda_1$-Lipschitz, $\hat h$ is $\lambda_2$-Lipschitz and the loss function $l$ is $\lambda$-Lipschitz, then for $x,y\in \mathcal{X}$
\begin{align}
|G(x)-G(y)| = & |l(\hat h(x),h(x))-l(\hat h(y),h(y))| \\
         \leq & \lambda(\lambda(|h(x)-h(y)|+|\hat h(x)-\hat h(y)|))\\
         \leq & \lambda_1 \|x-y\|+\lambda_2 \|x-y\|\\
         =&\lambda(\lambda_1+\lambda_2)\|x-y\|.
\end{align}
It is enough to divide $G$ by $\lambda_1+\lambda_2$ to get a 1-Lipschitz function. Actually, we can completely remove this factor because the process of minimizing the cost function will not change, with or without it. So in practice, it is not taken into account, the only constraint is that $\hat h$ and $h$ should be Lipschitz. This is not restrictive since we assume $\mathcal{X}$ to be compact. Indeed, any piecewise differentiable functions on compact sets are Lipschitz.
\end{remark}

\begin{coro}\label{coro:wass_emp_dist}
Assume that $\mathbb{E}(X^{d/2+2})<\infty$, then for any batch $B \subset U$, we have:
\begin{equation}\label{eq:coro_wed}
|\mathcal{R}_{P_X}(\hat h)-\mathcal{R}_{\mathbb{P}_{K\cup B}}(\hat h)| \leq W_1\left(\mathbb{P}_n,\mathbb{P}_{K\cup B}\right)+O_{\mathbb{P}}\left(\frac{1}{n^{d/2}}\right).
\end{equation}
\end{coro}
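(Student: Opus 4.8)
The plan is to reduce the statement to Theorem~\ref{thm:emp_risk_bound_wass} by inserting the full empirical measure $\mathbb{P}_n$ via the triangle inequality for $W_1$, and then to quote a quantitative rate for the convergence of the empirical measure in Wasserstein distance. Keeping the Lipschitz hypotheses of Theorem~\ref{thm:emp_risk_bound_wass} on $\hat h$ and on $G:x\mapsto l(\hat h(x),h(x))$, and noting that $\mathcal{R}_{P_{K\cup B}}$ and $\mathcal{R}_{\mathbb{P}_{K\cup B}}$ are two names for the empirical risk attached to the empirical measure of $K\cup B$, the theorem already gives
\begin{equation*}
|\mathcal{R}_{P_X}(\hat h)-\mathcal{R}_{\mathbb{P}_{K\cup B}}(\hat h)|\leq W_1\left(P_X,\mathbb{P}_{K\cup B}\right).
\end{equation*}

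First I would apply the triangle inequality for $W_1$ to get
\begin{equation*}
W_1\left(P_X,\mathbb{P}_{K\cup B}\right)\leq W_1\left(P_X,\mathbb{P}_n\right)+W_1\left(\mathbb{P}_n,\mathbb{P}_{K\cup B}\right).
\end{equation*}
The point of this split is that $\mathbb{P}_n$ is the empirical measure of the \emph{whole} sample and hence does not depend on the (possibly data-dependent) choice of $B$: the right-hand side decomposes into $W_1(\mathbb{P}_n,\mathbb{P}_{K\cup B})$, the computable quantity that the query strategy is precisely designed to make small, and the purely statistical term $W_1(P_X,\mathbb{P}_n)$, which remains to be controlled and is the same for every batch $B\subset U$.

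For this last term I would invoke the non-asymptotic deviation bounds of Fournier and Guillin \cite{FG2015} on the expected Wasserstein distance between a probability measure and its empirical counterpart: under the postulated moment assumption $\mathbb{E}\|X\|^{d/2+2}<\infty$, $\mathbb{E}\!\left[W_1(P_X,\mathbb{P}_n)\right]$ is bounded by the dimension-dependent decay rate appearing in \eqref{eq:coro_wed}, and Markov's inequality then upgrades this expectation bound to the $O_{\mathbb{P}}$ statement. Chaining the three displays gives the corollary.

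The main obstacle I anticipate is this last step: one has to select the correct regime of the Fournier--Guillin estimates — their behaviour changes according to whether $d$ lies below, at, or above the relevant threshold — and to check that the moment order $d/2+2$ is large enough to absorb the lower-order contribution of their bound into the leading rate. A minor additional point is that $B$, and hence $\mathbb{P}_{K\cup B}$, may be chosen after inspecting the data, so that it is correlated with the sample; but since only $W_1(P_X,\mathbb{P}_n)$ — a quantity that ignores $B$ entirely — is bounded probabilistically, no uniform control over batches is required, and the two Lipschitz hypotheses of Theorem~\ref{thm:emp_risk_bound_wass} are the only structural assumptions inherited here.
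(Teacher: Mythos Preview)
Your proposal is correct and follows essentially the same route as the paper: apply Theorem~\ref{thm:emp_risk_bound_wass}, split $W_1(P_X,\mathbb{P}_{K\cup B})$ via the triangle inequality through $\mathbb{P}_n$, and control $W_1(P_X,\mathbb{P}_n)$ using \cite{FG2015}. The only cosmetic difference is that the paper quotes the concentration inequality (Theorem~2 of \cite{FG2015}) directly to obtain the $O_{\mathbb{P}}$ rate, whereas you propose the expectation bound followed by Markov; either variant suffices.
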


\begin{proof}
The triangular inequality gives 
\[
W_1(P_X,\mathbb{P}_{K\cup B})\leq \ W_1(P_X,\mathbb{P}_{n})+W_1(\mathbb{P}_n,\mathbb{P}_{K\cup B}).
\]
For any $\varepsilon\in (0,1)$, Theorem 2 from  \cite{FG2015} applied to $W_1(\mathbb{P}_X,\mathbb{P}_n)$ provides the existence of positive constants $\alpha$ and $\beta$ not depending on $n$ such that 
\[
\mathbb{P}\left(W_1(\mathbb{P}_X,\mathbb{P}_n)>\varepsilon\right)\leq \alpha e^{-\beta n \varepsilon^d}+\alpha \frac{\varepsilon^{1-d/2}}{n^{d/2}}.
\]
Hence the result.
\end{proof}

\begin{remark}
The assumption $\mathbb{E}(X^{d/2+2})<\infty$ can be relaxed, in fact result from \cite{D68} allows to show that convergence remains true but with a speed of convergence in $n^{-1/d}$. Which is much worst especially in the case of high dimensional covariates.
\end{remark}


This bound, based on the result from \cite{FG2015}, suggests that making $W_1\left(\mathbb{P}_n,\mathbb{P}_{K\cup B}\right)$ decrease may reduce the gap between $\mathcal{R}_{\mathbb{P}_{K\cup B}}(\hat h)$ and $\mathcal{R}_{\mathbb{P}_{X}}(\hat h)$. Another advantage of this formulation is that $W_1\left(\mathbb{P}_n,\mathbb{P}_{K\cup B}\right)$ is computable whereas $W_1\left(\mathbb{P}_X,\mathbb{P}_{K\cup B}\right)$ is not.

\begin{remark}
    Note that the distance $W_1\left(\mathbb{P}_n,\mathbb{P}_{K\cup B}\right)$ may be small with respect to $O_{\mathbb{P}}\left(\frac{1}{n^{d/2}}\right)$, especially when the subset $K$ is large with respect to $U$. However, this is not the range that is focused on this active learning procedure. In addition, the term $O_{\mathbb{P}}\left(\frac{1}{n^{d/2}}\right)$ does not depend on $B$, as a consequence it can be interpreted as the remaining error bound after all the dataset has been queried.
\end{remark}

We can rewrite the corollary \ref{coro:wass_emp_dist} to identify the milestones of our active learning strategy:
\begin{equation}\label{eq:bound_milestone}
    \mathcal{R}_{P_X}(\hat h)\leq \mathcal{R}_{\mathbb{P}_{K\cup B}}(\hat h) + W_1\left(\mathbb{P}_n,\mathbb{P}_{K\cup B}\right)+c_n,
\end{equation}
with $c_n$ not depending on $B$. Hence the minimization of $\mathcal{R}_{P_X}(\hat h)$ can be achieved by optimizing the right-hand side of inequality \eqref{eq:bound_milestone} on $\hat h$ and $B$. This leads to the construction of two criteria for the batch selection. More precisely, a score function $S$ is considered, and points $x\in U$ maximizing this score are added to the batch. In the active learning framework, such a function is often called an acquisition function. The bound \eqref{eq:bound_milestone} leads to the consideration of a score function of the form:
\begin{equation}
    S(x)=r(x)+w(x),\quad x\in U, 
\end{equation}
were $r$ relies on the minimization of $\mathcal{R}_{\mathbb{P}_{K\cup B}}(\hat h)$ whereas $w$ relies on the Wasserstein distance. Note that there is no counterpart to the constant $c_n$ in the score function since this constant does not depend on $B$. The construction of the functions $r$ and $w$ are detailed in the following sections. 

\subsection{Training the estimator}

To achieve the learning step, we minimize the term $\mathcal{R}_{P_K}(\hat h)$ defined as: 
\begin{equation}
\mathcal{R}_{P_K}(\hat h)=\frac{1}{n_K}\sum_{ x \in K} l(\hat h(x),h(x))\label{eq:coro2}
\end{equation}
with respect to $\hat h$. Afterward, we will query data by using a hybrid strategy based on two criteria: we want to pick the batch $B$ that makes $W_1\left(\mathbb{P}_n,\mathbb{P}_{K\cup B}\right)$ decreases the most, and we  also want to maximize the uncertainty of $\hat h$  in order to encourage our model to train on data where $\hat h$ struggles at estimating the right labels. Intuitively, the batch $B$ has to be composed of points containing as much information as possible about the underlying data distribution and located in places where $\hat h$ can still improve its estimations.

\subsection{Minimizing the Wasserstein distance}\label{sec:war_imple}

Considering the first criterion, it is clearly unreasonable to compute all Wasserstein distances $W_1\left(\mathbb{P}_n,\mathbb{P}_{K\cup B}\right)$ for each possible $B$. To tackle this issue we compute the Wasserstein distance using the Kantorovitch-Rubinstein duality between $\mathbb{P}_n$ and $\mathbb{P}_{K}$ by maximizing $\hat W$ defined as   
\begin{align}
\hat W(\varphi)&=\left|\int_\mathcal{X}\varphi \ \mathrm{d}\mathbb{P}_n-\int_\mathcal{X}\varphi \ \mathrm{d}\mathbb{P}_K\right| \\
   &=\left|\frac{1}{n}\sum_{i=1}^n\varphi(X_i)-\frac{1}{n_K}\sum_{ x\in K}\varphi(x)\right|\label{eq:hatW_emp}
\end{align}
with $\varphi$ belonging to the set of 1-Lipschitz functions. Note that the maximisation of $\hat W$ allows to approximate the Wassserstein distance since $W_1(\mathbb{P}_n,\mathbb{P}_{K})=\sup\{ \hat W(\varphi)\ | \ \varphi \text{ is 1-Lipschitz}\}$. From a practical point of view, $\varphi$ is a neural network that we optimize by using $\hat W$ as a cost function. The function $\varphi$ hence obtained entirely characterizes $W_1\left(\mathbb{P}_n,\mathbb{P}_{K}\right)$. To minimize $W_1\left(\mathbb{P}_n,\mathbb{P}_{K}\right)$, we have to choose the batch $B$ belonging to $U$ that maximizes $\varphi$, and add it to $K$ by labeling it.



 We can deduce from Equation \eqref{eq:hatW_emp} that the maximization of $\hat W(\varphi)$ involves the minimization of $\varphi$ near points in $K$, and its maximization near points in $U$. Because $\varphi$ must be 1-Lipschitz, the fluctuations of the function are constrained. An intuitive consequence is that $\varphi$ has to make compromises: some points must be "ignored" in order to be maximum near more relevant locations. Indeed,  $\varphi$  tends to take its highest values (respectively, lowest) near clusters of unlabelled points (respectively, labeled points). However, if there are no labeled points in the neighborhood, $\varphi$ can increase without restrictions. These behaviors are illustrated in the figure below: 

\begin{figure}
\centerline{\includegraphics[width=1\linewidth, height=5cm]{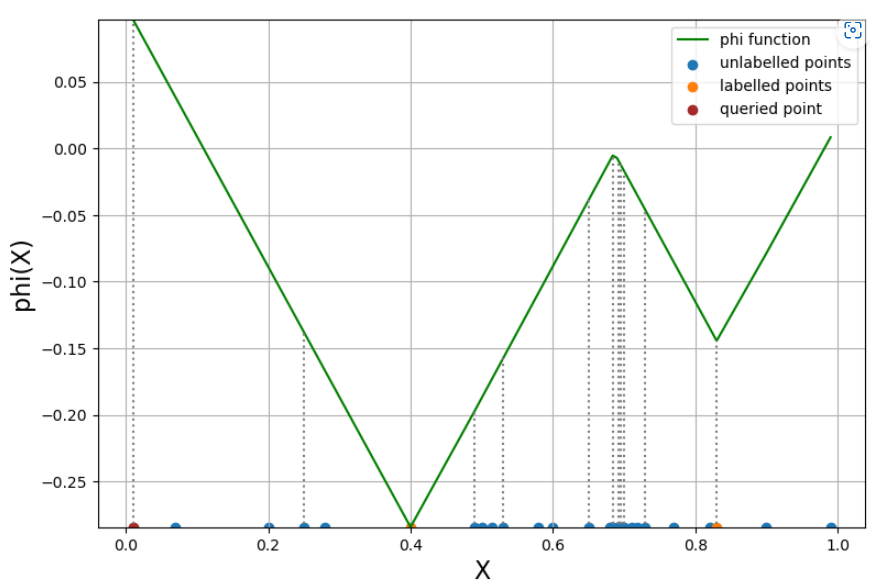}}
\caption{Values of $\varphi$ on a labelled and an unlabelled distribution in $[0,1]$}\label{courb:Wasserstein}
\end{figure}

Here, $\varphi$ is maximized for unlabelled points that are very far from any labeled points, or packed in a cluster. Consequently, in the context of active learning, we can use the Wasserstein distance as a diversity-based and representativity-based sampling method.

 In order to guarantee a proper estimation of the Wasserstein distance, we have to impose $\varphi$ to be a 1-Lipschitz function. A classic method would be to penalize the gradient of $\varphi$ for enforcing its boundedness \cite{FADC2017}, \cite{GFPC2021}. But, in this case, we may underestimate the Wasserstein distance in a proportion that we can't quantify (to our knowledge). Consequently, we chose to use Group Sort Neural Networks, which present good properties for the estimation of $1-$Lipschitz functions (see section \ref{sec:GS_NN} for the definition and mains properties). A density result from \cite{ALG2019} ensures that the supremum over such networks is equal to the Wasserstein distance (see Proposition \ref{prop:was_GS_NN}).

\subsection{Minimizing the predictions uncertainty and query procedure}\label{subsec:WAR_implementation}

 This section addresses the problem of the construction of the score function $S$ based on the Wasserstein distance with an uncertainty-based method. To query the most relevant data in $U$, we need to evaluate each of them through the function $S$ and build $B$ by taking the points that maximize it. As presented in section \ref{subsec:theory}, the score function includes two terms $r$ and $w$. For $x\in U$, the term $w(x)$ relies on the decay of the Wassertein distance $W_1(\mathbb{P}_n,\mathbb{P}_{K\cup B})$ when $x$ is added to $B$. In this sense, $w$ is chosen proportional to $\varphi$ maximizing \eqref{eq:hatW_emp}, namely
 \begin{equation}
     w(x)= \beta \varphi(x)
 \end{equation}
with $\beta$ a positive constant.
The choice of $r$ is not so straightforward. In the best-case scenario, we would be able to retrieve the loss and make $r(x)=l(\hat h (x), h(x))$ with $x\in U$ to query the points where this loss is the largest (as hinted by \eqref{eq:coro2}). However, for such $x$, the variable $h(x)$ is not observed. To our knowledge, there is no efficient way to estimate the loss without losing too much generality. This issue comes from the fact that a regression framework with no assumptions on the second-order derivative of $h$ leads to a non-convex optimization problem. So instead of focusing on the loss function, a disagreement sampling approach will be considered to reduce the uncertainty of the predictions. The objective is to determine the regions where the model struggles the most to find a stable estimation of the output. To do so, several copies of $\hat h$ are trained. Then, data in $U$ are ranked according to the estimated standard deviation $s_h$ of their prediction. Formally, the uncertainty $s_h(x)$ of a point $x\in U$ is defined as: 
\begin{equation}
    s_h(x)=\sqrt{\frac{1}{k}\sum_{i=1}^k\left( \hat h_i(x)-\frac{1}{k}\sum_{i=1}^k \hat h_i(x)\right)^2}
\end{equation}
where $(\hat h_1,\dots,\hat h_k)$ are retrained version of $\hat h$. The points picked out are the ones with the highest uncertainty.
However, a main concern about uncertainty-based sampling is its tendency to give more importance to unrepresentative regions of the input space, such as outliers. To correct this undesired behavior, for each point in $U$, we penalize its estimated standard deviation prediction by the mean of its distance from the other points. 
Moreover, a hyperparameter $\alpha > 0$ is added to weight the penalization of points located far from the barycenter of data distribution. Note that this hyperparameter might be misleading for distributions with several data clusters. So without much knowledge of the data distribution, it might be safer to put the hyperparameter to relatively low values (less than $1$) to reduce the risk of creating a bias. Gathering these considerations lead to:
 \begin{equation}
r(x)= \frac{s_h(x)}{\left(\frac{1}{n}\sum_{i=1}^n\| X_i-x\|_2\right)^\alpha},\quad x\in U.     
 \end{equation}
Note that $r$ depends on the values taken by our pool of estimators $(\hat h_1,\dots,\hat h_k)$, whereas $w$ depends only on the probability distribution of the covariates. Such dependencies can induce a major difference between the scale of the two terms $r$ and $w$. In order to tackle this issue those terms are divided  by their respective empirical standard deviation on $U$. Such homogenization can avoid the situation where one criterion becomes the sole discriminator in our score function because of its bigger variance. Nonetheless, this gives the final score function used to evaluate every $x\in U$  to measure their benefit in order to minimize $ \mathcal{R}_{P_X}(\hat h)$:
  \begin{equation}\label{eq:finalscore}
  S(x)=\frac{s_h(x)}{s_r \left(\frac{1}{n}\sum_{i=1}^n\| X_i-x\|_2\right)^\alpha}+\beta\frac{\varphi(x)}{s_\varphi}
 \end{equation}
where $s_r$ and $s_\varphi$ are respectively the empirical standard deviation of $r$ and $\varphi$.
Finally, the queried batch contains the points for which the equation \eqref{eq:finalscore} reaches the maximum value. The hyperparameter $\beta$ can be tuned by the user to decide whether to give more weight to one criterion or to the other.

The score function is based on three of the main strategies used in active learning: diversity, representativity, and uncertainty. This allows our method to leverage several important aspects of the data distribution and of the chosen estimator. The two hyperparameters $\alpha$ and $\beta$ give some flexibility to tune the function. We detail the algorithm that was created to apply this method:

\begin{algo}[One iteration of batch selection]\textbf{ }\label{algo}
\begin{itemize}
    \item \textbf{arguments : }
    \begin{itemize}
    \item The labeled set $K$ and the unlabeled set $U$ .
    \item The batch size $n_B$.
    \item The vectors $\theta_h$ and $\theta_\varphi$ of parameters of the networks $\hat h$ and $\varphi$.
    We initialize those vectors randomly.
    \item $\eta_h$ and $\eta_\varphi$, the learning rates of $\hat h$ and $\varphi$ respectively.
    \item Uncertainty/representativity-diversity trade-off parameter $\beta$.
    \item Outlier penalization parameter $\alpha$.
    \end{itemize}
    \item \textbf{training $\hat h$ :} \\
         Update $\theta_h=\theta_h-\eta_h \nabla_{\theta_h}$ on $K$. The gradient descent is performed with the Adam algorithm \cite{KB2014}.
    \item \textbf{Query : }\\
    \quad Set $B$ empty.\\
    \quad While $size(B)\neq n_B$ do 
    \begin{enumerate}
        \item update $\theta_\varphi=\theta_\varphi+\eta_\varphi \nabla_{\theta_\varphi}$ on $K\cup B$. The gradient descent is performed with the Adam algorithm \cite{KB2014}.
        \item Compute $S(x)$ for all $x$ in $U$ 
        \item Add the $x$ with the highest $S(x)$ value to $B$ and remove it from $U$
    \end{enumerate}
    End While
    \item \textbf{Updating sets :} \newline $K=K\cup B$ and $U=U\backslash B $.
    
\end{itemize}
\end{algo}

\begin{remark}
In this algorithm, it is important to note that we retrain $\varphi$ to select each point of $B$ one by one. We do so in order to favor diversity in the queried batch. Indeed, if the same function $\varphi$ is used to pick out the entire batch $B$, the algorithm has a tendency to choose a restricted cluster of points. Which is counter-effective in regards to the objective of the coefficient $\varphi$ in the acquisition function $S$.
\end{remark}

To create the initial pool of labeled data, we will use the K-means algorithm: we define $b_0$ clusters and take the closest points to each cluster center.

\begin{algo}[Full WAR Algorithm]\textbf{}
    \begin{itemize}
        \item \textbf{Arguments} : 
        \begin{itemize} 
        \item Same as algorithm \ref{algo} 
        \item  The number of rounds $N_{query}$ (i.e. the number of call to algorithm \ref{algo}).
        \item Number of initial data $b_0$.
        \end{itemize}
        \item \textbf{Initialisation} :\newline
        Query the $b_0$ initial data using K-means
        \item \textbf{For} $N=1\dots N_{query}$ \textbf{Do} algorithm \ref{algo}.
    \end{itemize}
\end{algo}

\section{ Group Sort Neural Networks}\label{sec:GS_NN}

Informally, GroupSort neural networks introduced by \cite{ALG2019} are simply neural networks with an activation function that sorts the input of each layer. This activation function divides the input into several blocs of the same size and sorts the elements in each block in decreasing order. The output is hence the concatenation of the sorted blocs. As an example, if the input is 
\[
(9,6,10,8,6,10,7,9,5,9,6,4,5,8,8),
\]
the output of the GroupSort activation function with grouping size $3$ is 
\[
\begin{array}{cccccc}
(\underbrace{9,6,10},& \underbrace{8,6,10},& \underbrace{7,9,5},& \underbrace{9,6,4},& \underbrace{5,8,8})& \\
(10,9,6,& 10,8,6,& 9,7,5,& 9,6,4,& 8,8,5)& \\       
\end{array}
\]
The choice of neural networks using the GroupSort activation function is mainly motivated by the fact that \cite{ALG2019} has shown the density of the set of such neural networks in the set of 1-Lipschitz functions. Which allows giving an exact expression of the Wasserstein distance in terms of such neural networks (see Proposition \ref{prop:was_GS_NN}). They also showed empirically that they converge faster than other methods when it comes to estimating $1-$Lipschitz functions. Recent works \cite{BST2022,BCST2020,STCKB2022} highlight the pertinence of GroupSort neural networks in this area, hence their importance for query by probability matching applications using the Wasserstein distance.

\subsection{Presentation}

\begin{defi}[Group Sort activation function \cite{ALG2019}]
Let $k$ and $n \in \mathbb{N}$, the Group Sort function $\sigma_k$ is a function from $\mathbb{R}^n$ to $\mathbb{R}^n$ which split an input $(x_1,\dots,x_n)$ into $n/k$ blocs of size $k$ and return the vector in which each bloc is ordered. More formally, one define for $i=1,\dots,n/k$, $G_i=\lbrace x_{k(i-1)},\dots,x_{k i}\rbrace$ and 
\begin{equation}
\sigma(x_1,\dots,x_n)=(\tilde G_1,\dots,\tilde G_{n/k})
\end{equation}
where $\tilde G_i$ is the ordered version of $G_i$. That is, for $x_k$ and $x_j \in \tilde G_i$, $x_k\leq x_j$ if $k<j$. The integer $k$ is called the grouping size.
\end{defi}

\begin{defi}[GroupSort neural network \cite{TSB2021} ]
Let $q\in \mathbb{N}$ and $\alpha\in \Lambda$, with $\Lambda=\lbrace V_1,\dots,V_q,c_1,\dots,c_q\rbrace$ where the $V_i$ are matrices and $c_i$ are vector with dimension detailed in definition \ref{def:Lambda}. We define the feedforward neural network with $q$ layers $h_{\alpha}$, from $\mathbb{R}^d$ to $\mathbb{R}^p$, iteratively as
\begin{itemize}
    \item $l_1=\sigma(V_1 x + c_1)$
    \item $l_i=\sigma(V_i l_{i-1} +c_i)$ for $i=2\dots q$
\end{itemize}
where the function $\sigma$ is a Group Sort activation function. 
\end{defi}

The assumption characterizing the networks of interest involves two matrix norms defined for a matrix $A$ and a vector $X$ as  
\[
\|A\|_\infty= \underset{\|X\|_\infty=1}{\sup}\|AX\|_{\infty} \quad \text{and}\quad \|A\|_{2,\infty}=\underset{\|X\|_2=1}{\sup}\|AX\|_{\infty}
\]
where $\|.\|_2$ denotes the euclidean norm whereas $\|.\|_\infty$ denotes the norm of the supremum.
\begin{assum}[\cite{TSB2021}]\label{assum:groupsort}
For $\alpha\in \Lambda$, we assume that 
\begin{itemize}
    \item $\|V_1\|_{2,\infty}\leq 1$
    \item $\max(\|V_2\|_\infty,\dots,\|V_q\|_\infty) \leq 1$
    \item there exists $K>0$ such that $\max(\|c_1\|_\infty,\dots,\|C_q\|_\infty) \leq K$.
\end{itemize}
For $k \geq 2$, we call $GS_k$ the set of GroupSort neural network with grouping size $k$ satisfying this assumption.
\end{assum}
In practice, the construction of a neural network fulfilling assumption \ref{assum:groupsort} relies on Bjork's orthonormalization algorithm \cite{BB1971}. Finally, density results in the set of $1-$Lpischitz functions stated in \cite{ALG2019} highlight the fact that the Wasserstein distance can be computed exactly by only looking at the space of GroupSort neural networks.  
\begin{prop}\label{prop:was_GS_NN}
Let $\mathcal{X}$ a compact subset of $\mathbb{R}^d$ endowed with the $L_1$ metric. Consider $GS_2$ the set of all Group Sort neural networks with $\sigma_2$ as the activation function. Then, for $P$ and $Q$ two probability measures on $\mathcal{X}$, we have: 
\begin{equation}
W_1(P,Q)=\underset{\varphi \in GS_2}{\sup}\int_\mathcal{X}\varphi \ \mathrm{d}P-\int_\mathcal{X} \varphi \ \mathrm{d}Q.
\end{equation}
\end{prop}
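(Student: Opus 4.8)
The plan is to establish the two inequalities $\sup_{\varphi\in GS_2}D_{P,Q}(\varphi)\le W_1(P,Q)$ and $W_1(P,Q)\le\sup_{\varphi\in GS_2}D_{P,Q}(\varphi)$ separately, where I abbreviate $D_{P,Q}(\varphi)=\int_\mathcal{X}\varphi\,\mathrm{d}P-\int_\mathcal{X}\varphi\,\mathrm{d}Q$ and write $\mathrm{Lip}_1(\mathcal{X})$ for the set of functions on $\mathcal{X}$ that are $1$-Lipschitz with respect to $\|\cdot\|_1$. Since $\mathcal{X}$ is compact, hence bounded, the integrability condition ensuring $W_1$ is a genuine distance holds, and Kantorovich--Rubinstein duality gives $W_1(P,Q)=\sup_{\varphi\in\mathrm{Lip}_1(\mathcal{X})}D_{P,Q}(\varphi)$; the whole proof then amounts to showing that restricting the supremum to $GS_2$ does not change its value.

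\textbf{Upper bound.} First I would verify $GS_2\subseteq\mathrm{Lip}_1(\mathcal{X})$ (after restriction to $\mathcal{X}$), which is a direct consequence of Assumption~\ref{assum:groupsort}. The first affine map $x\mapsto V_1x+c_1$ is non-expansive from $(\mathbb{R}^d,\|\cdot\|_1)$ to the output space equipped with $\|\cdot\|_\infty$, because $\|V_1\|_{2,\infty}\le1$ forces $\sup_{\|x\|_1\le1}\|V_1x\|_\infty\le1$ as well, the $\ell_1$ unit ball being contained in the $\ell_2$ unit ball; each subsequent affine map is non-expansive for $\|\cdot\|_\infty$ since $\|V_i\|_\infty\le1$; and the GroupSort activation $\sigma_2$ is non-expansive for $\|\cdot\|_\infty$, as within each block $(a,b)\mapsto(\max(a,b),\min(a,b))$ satisfies $|\max(a,b)-\max(a',b')|\le\|(a,b)-(a',b')\|_\infty$ and likewise for $\min$, the block structure only taking a maximum over blocks. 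A composition of non-expansive maps with matching intermediate norms is non-expansive, so every $\varphi\in GS_2$ lies in $\mathrm{Lip}_1(\mathcal{X})$, and taking the supremum over this smaller family gives $\sup_{\varphi\in GS_2}D_{P,Q}(\varphi)\le W_1(P,Q)$.

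\textbf{Lower bound.} For the reverse inequality I would invoke the approximation result of \cite{ALG2019} (in the form used by \cite{TSB2021}): on the compact $\mathcal{X}$, for every $\varphi\in\mathrm{Lip}_1(\mathcal{X})$ and every $\varepsilon>0$ there exists $\tilde\varphi\in GS_2$ with $\sup_{x\in\mathcal{X}}|\varphi(x)-\tilde\varphi(x)|\le\varepsilon$. This applies cleanly here because $GS_2$ constrains neither the width nor the depth, and the bias bound $\max_i\|c_i\|_\infty\le K$ of Assumption~\ref{assum:groupsort} is immaterial on a bounded domain ($K$ may be taken large). Since $P$ and $Q$ are probability measures, $|D_{P,Q}(\varphi)-D_{P,Q}(\tilde\varphi)|\le2\varepsilon$, so $D_{P,Q}(\varphi)\le\sup_{\psi\in GS_2}D_{P,Q}(\psi)+2\varepsilon$. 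Taking the supremum over $\varphi\in\mathrm{Lip}_1(\mathcal{X})$ and then letting $\varepsilon\downarrow0$ yields $W_1(P,Q)\le\sup_{\psi\in GS_2}D_{P,Q}(\psi)$ via Kantorovich--Rubinstein, and combining with the upper bound gives equality.

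\textbf{Where the difficulty lies.} The computations in the upper bound and the $\varepsilon$-manipulation in the lower bound are routine. The real content is the density theorem itself, and the one point that must be checked carefully is the compatibility of the metrics: $\mathcal{X}$ carries the \emph{$L_1$} metric, whereas $GS_2$ is defined through the mixed operator-norm constraints $\|V_1\|_{2,\infty}\le1$ and $\|V_i\|_\infty\le1$ of Assumption~\ref{assum:groupsort}; one has to be sure that \cite{ALG2019} delivers, for each $\|\cdot\|_1$-$1$-Lipschitz target on $\mathcal{X}$, an approximant that \emph{already} belongs to $GS_2$. As this is exactly the cited statement, I would quote it at that step rather than reprove the density, which is the technically heavy ingredient underlying the proposition.
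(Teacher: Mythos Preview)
Your proof is correct and follows essentially the same route as the paper: Kantorovich--Rubinstein duality, the inclusion $GS_2\subseteq\mathrm{Lip}_1$, density of $GS_2$ in $\mathrm{Lip}_1$ from \cite{ALG2019}, and the bound $|D_{P,Q}(\varphi)-D_{P,Q}(\tilde\varphi)|\le 2\|\varphi-\tilde\varphi\|_\infty$. The only cosmetic difference is that the paper packages the lower bound as ``the functional $\Phi(\varphi)=\int\varphi\,\mathrm{d}P-\int\varphi\,\mathrm{d}Q$ is $2$-Lipschitz for $\|\cdot\|_\infty$, hence the supremum over a dense subset agrees with the full supremum,'' whereas you unwind this into the explicit $\varepsilon$-argument and also verify the Lipschitz property of $GS_2$ networks directly rather than citing it.
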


\begin{proof}
By the Kantorovich-Rubinstein duality [11] we know that 
\[
W_1(P,Q)=\underset{\varphi \in Lip_1}{\sup}\int_\mathcal{X}\varphi \ \mathrm{d}P-\int_\mathcal{X}\varphi \ \mathrm{d}Q.
\]
Moreover Anil et al. (2019) have shown that the set $GS_2$ is constituted of 1-Lipschitz functions. And furthermore showed that this set is dense in the set of 1-Lipschitz function for the $L_\infty$ norm. So we conclude the proof establishing the continuity of the map 
\[
\begin{array}{cccl}
\Phi:& (Lip_1, \|.\|_\infty) & \rightarrow  & (\mathbb{R},|.|) \\
&\varphi &\mapsto & \int_\mathcal{X}\varphi \ \mathrm{d}P-\int_\mathcal{X}\varphi \ \mathrm{d}Q.
\end{array}
\]
In fact, we can show that this map is Lipschitz continuous. Indeed, let $\varphi_1$ and $\varphi_2 \in Lip_1$, we have 
\begin{align*}
    |\Phi(\varphi_1)-\Phi(\varphi_2)|= & \left|\int_\mathcal{X}\varphi_1 \ \mathrm{d}P-\int_\mathcal{X}\varphi_1 \ \mathrm{d}Q-\int_\mathcal{X}\varphi_2 \ \mathrm{d}P+\int_\mathcal{X}\varphi_2 \ \mathrm{d}Q\right| \\
    \leq &  \int_\mathcal{X}\left\|\varphi_1-\varphi_2\right\| \ \mathrm{d}P+\int_\mathcal{X}\left\|\varphi_1-\varphi_2\right\| \ \mathrm{d}Q \\
    \leq & 2\left\|\varphi_1-\varphi_2\right\|_\infty.
\end{align*}
\end{proof}

A sequence of Group Sort neural networks allowing the computation of the Wasserstein distance may involves networks with growing depth and size. However \cite{TSB2021} have provided, for a given error, bounds for both depth and size (see next section).
 
\subsection{Network Architecture}

This section present a result about sufficient depth and size for GroupSort neural networks in order to achieve a given precision on the estimation of the Wasserstein distance. \\

Since the set $GS_2$ contains all GroupSort neural networks, the density result involves the consideration of a neural network with non-bounded size and depth. Consequently, if $\varphi$ is restricted to some fixed size and depth, we will not be able to make a perfect approximation of $W_1(P,Q)$. Nevertheless, Tanielian et al. \cite{TSB2021} gave bounds for both depth and size considering a given error. This can help to create the right architecture for the network $\varphi$.

\begin{prop}\label{prop:error_was_GS}
Let $\varepsilon >0$, there exist $\varphi^*\in GS_2$ such that 
\begin{equation}
\left|W_1(P,Q)-\left(\int_\mathcal{X}\varphi^* \ \mathrm{d}P-\int_\mathcal{X}\varphi^* \ \mathrm{d}Q\right)\right|\leq \varepsilon,
\end{equation}
and, for any $0 < \delta < \varepsilon $ the dimensions of $\varphi^*$ is bounded as follows : 
\begin{itemize}
\item Size of $\varphi^*$ is a $O\left(\left(\frac{\sqrt{d}}{\varepsilon-\delta}\right)^{d^2}\right)$.
\item Depth of $\varphi^*$ is a $O\left(d^2\log_2\left(\frac{4\sqrt{d}}{\varepsilon-\delta}\right)\right)$.
\end{itemize}
\end{prop}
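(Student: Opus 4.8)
The plan is to reduce the statement to the quantitative approximation theorem for GroupSort networks in \cite{ALG2019} (Theorem 3 there), which says that any $1$-Lipschitz function on $[0,1]^d$ (or more generally on a compact set in $\mathbb{R}^d$ equipped with the $L_\infty$ metric) can be $\varepsilon$-approximated in sup-norm by a GroupSort network of grouping size $2$ whose size and depth are controlled explicitly by $d$ and $\varepsilon$. So the first step is to invoke Proposition \ref{prop:was_GS_NN}: for the given $\varepsilon$ and the slack parameter $\delta$, there exists a genuine $1$-Lipschitz function $\varphi_\delta \in Lip_1$ with
\[
W_1(P,Q) - \left(\int_\mathcal{X}\varphi_\delta\,\mathrm{d}P - \int_\mathcal{X}\varphi_\delta\,\mathrm{d}Q\right) \leq \delta,
\]
which exists by Kantorovich--Rubinstein duality (we only need near-optimality up to $\delta$, and we can assume $\varphi_\delta(x_0)=0$ for some reference point $x_0\in\mathcal{X}$ without changing the integral difference, since $\mathcal{X}$ is compact this bounds $\|\varphi_\delta\|_\infty$).

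Second, I would apply the GroupSort approximation theorem of \cite{ALG2019} to $\varphi_\delta$ with target sup-error $\varepsilon-\delta$: this yields $\varphi^* \in GS_2$ with $\|\varphi^*-\varphi_\delta\|_\infty \leq \varepsilon-\delta$ and, crucially, with the stated size and depth bounds $O\big((\sqrt d/(\varepsilon-\delta))^{d^2}\big)$ and $O\big(d^2\log_2(4\sqrt d/(\varepsilon-\delta))\big)$ — these are exactly the bounds in that paper's construction, where the $d^2$ exponent comes from triangulating $[0,1]^d$ into simplices and representing the piecewise-linear interpolant, and the logarithmic depth comes from the max/min circuit depth needed to implement a max of that many affine pieces using $\sigma_2$ gates. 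Third, I would combine the two estimates exactly as in the proof of Proposition \ref{prop:was_GS_NN}: since $\Phi(\psi)=\int\psi\,\mathrm{d}P-\int\psi\,\mathrm{d}Q$ is $2$-Lipschitz in sup-norm, actually here it suffices to note $|\Phi(\varphi^*)-\Phi(\varphi_\delta)| \leq \int|\varphi^*-\varphi_\delta|\mathrm{d}P+\int|\varphi^*-\varphi_\delta|\mathrm{d}Q \leq 2(\varepsilon-\delta)$ — wait, this gives $2(\varepsilon-\delta)+\delta$, which is not $\leq\varepsilon$ in general, so the bookkeeping has to be done more carefully: one should instead approximate $\varphi_\delta$ to sup-error $(\varepsilon-\delta)/2$, or more cleanly observe that since $\varphi^*$ and $\varphi_\delta$ can be taken to agree on the finitely many atoms of the empirical measures $P,Q$ one can get a one-sided bound. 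The honest route is the triangle inequality $|W_1(P,Q)-\Phi(\varphi^*)| \leq |W_1(P,Q)-\Phi(\varphi_\delta)| + |\Phi(\varphi_\delta)-\Phi(\varphi^*)| \leq \delta + 2\|\varphi^*-\varphi_\delta\|_\infty$, so to land at $\varepsilon$ one sets the approximation error to $(\varepsilon-\delta)/2$; the stated bounds then hold up to replacing $\varepsilon-\delta$ by $(\varepsilon-\delta)/2$, which does not change the $O(\cdot)$ form.

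The main obstacle is not conceptual but lies in matching the hypotheses of \cite{ALG2019} to the present setting: that paper works on the cube $[0,1]^d$ with the $L_\infty$ metric, whereas the statement here allows a general compact $\mathcal{X}\subset\mathbb{R}^d$ with the $L_1$ metric, and the size/depth constants implicitly absorb the diameter of $\mathcal{X}$ and the norm-equivalence factors between $L_1$ and $L_\infty$ on $\mathbb{R}^d$ (these contribute the $\sqrt d$ factors visible in the bounds). I would handle this by rescaling and translating $\mathcal{X}$ into the unit cube, tracking how a $1$-Lipschitz function with respect to $L_1$ becomes Lipschitz with a dimension-dependent constant with respect to the rescaled $L_\infty$ metric, and then verifying that the architectural bounds transform as claimed; this is the one place where genuine care (rather than a black-box citation) is needed, and it is essentially the content of Tanielian--Sangnier--Biau \cite{TSB2021} that the proposition attributes the result to.
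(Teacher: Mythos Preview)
Your proposal is correct and follows essentially the same route as the paper: choose a near-optimal $1$-Lipschitz $\varphi_\delta$ with $|W_1(P,Q)-\Phi(\varphi_\delta)|\leq\delta$, approximate it in sup-norm to accuracy $(\varepsilon-\delta)/2$ by some $\varphi^*\in GS_2$, and combine via the $2$-Lipschitz bound on $\Phi$ from the proof of Proposition~\ref{prop:was_GS_NN}. The only differences are that the paper attributes the quantitative size/depth bounds to \cite{TSB2021} rather than to \cite{ALG2019}, and it does not address the domain/metric matching issues you raise in your last paragraph---it simply invokes the black-box result.
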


\begin{proof}[Proof of Proposition 3] 
 Let $ \varphi^* \in GS_2$, $P, Q$ two probability measures on $\mathcal{X}$ and $\delta >0$, we have
\begin{align*}
    \left|W_1(P,Q)-\left(\int_\mathcal{X}\varphi^* \ \mathrm{d}P-\int_\mathcal{X}\varphi^*\mathrm{d}Q\right)\right|&\leq\left|W_1(P,Q)-\left(\int_\mathcal{X}\varphi_0 \ \mathrm{d}P-\int_\mathcal{X}\varphi_0\mathrm{d}Q\right)\right| \\
    +& \left|\left(\int_\mathcal{X}\varphi_0 \ \mathrm{d}P-\int_\mathcal{X}\varphi_0\mathrm{d}Q-\int_\mathcal{X}\varphi^* \ \mathrm{d}P-\int_\mathcal{X}\varphi^*\mathrm{d}Q\right)\right|.
\end{align*}

Where $\varphi_0$ is a 1-Lipschitz function chosen such that, 
\[
\left|W_1(P,Q)-\left(\int_\mathcal{X}\varphi_0 \ \mathrm{d}P-\int_\mathcal{X}\varphi_0\mathrm{d}Q\right)\right|\leq \delta.
\]
Such a function exists since the Wasserstein distance is a supremum over 1-Lipschitz functions. As a consequence, this remark together with the last bound given in the proof of Proposition 2, we have for any $\delta$, that there exists a Lipschitz function $\varphi_0$ such that 
\[
\left|W_1(P,Q)-\left(\int_\mathcal{X}\varphi^* \ \mathrm{d}P-\int_\mathcal{X}\varphi^*\mathrm{d}Q\right)\right| \leq \delta+2\|\varphi_0-\varphi^*\|_\infty.
\]
Now, to estimate $W_1(P,Q)$ using neural network $\varphi^*$ in $GS_2$ accepting an error of $\varepsilon$, we can choose $\varphi^*$ such that 
\[
\|\varphi_0-\varphi^*\|_\infty \leq \frac{\varepsilon-\delta}{2}.
\]

For such an approximation, Tanielian et al. [29]  have shown that the depth of $\varphi^*$ can be chosen equal to $((\sqrt{d}/(\varepsilon-\delta))^{d/2})$ and its size as a $O(d^2\log_2(4\sqrt{d}/(\varepsilon-\delta)))$. Note that these two quantities are as greater as $\varepsilon$ and $\delta$ are closer. But, those bounds do not involve the values of $\varphi_0$, that's why we can choose $\delta$ smaller as we want. 
\end{proof}

The bounds provided in this section are huge, especially for high dimensional covariates. One can see in section \ref{sec:num_exp} that we can achieve the wanted accuracy with smaller networks.

\section{Numerical experiments}\label{sec:num_exp}

 For every dataset used, we split the trainset and the testset using a ratio of 80/20. We started the active learning procedure shown in Algorithm \ref{algo} initializing $K$ with around $2\%$ to  of the data available and we queried about $2\%$ of the trainset at each iteration. The models were trained by minimizing the mean squared error. The accuracy of the fitted models $\hat h$ was measured using RMSE. We retrieved the accuracy achieved in each model  when $25\%$ of the dataset were queried. Finally, we computed the area under the learning curve using the composite trapezoidal rule.

\subsection{Models and datasets used}\label{subsec:modat}

We compared the WAR model with 8 models, each using different query strategies: 
Random query, query by disagreement sampling with a committee of 5 neural networks, greedy Sampling on the Inputs (GSx) \cite{WU2019}, improved Greedy Sampling  (iGS) \cite{WU2019}, query by information density (euclidean method), query by information density (cosine method), improved Representativeness-Diversity Maximization (iRDM) \cite{Liu2021} and inverse-Distance based Exploration for Active Learning (IDEAL)\cite{Bem2023}.

 These tests were performed using five UCI datasets: Boston Housing \cite{HR1978}, Airfoil Self-Noise \cite{BPM1989}, Energy efficiency \cite{TX2012}, Concrete Slump Test \cite{Y2007}, and Yacht Hydrodynamics \cite{GOV1981}. In the presented tables these dataset are denoted respectively as "Bo", "Ai", "En", "Ya" and "Co". More information is available in the supplementary materials. There were several target columns in the Energy Efficiency and Concrete Slump Test datasets, we kept the Heating Load and SLUMP columns respectively, and dropped the others. Every categorical feature was already in a numerical form. We used a Min-Max scaler to scale the input data.
 
\subsection{Implementation}\label{subsec:implement}

In every model, the estimator $\hat h$ is a neural network with two hidden layers with respective sizes 16 and 32, completed by RELU activation functions. They all ran for 100 epochs in fullbatch. We add $L^2$ regularization on $h$ with a weight decay of 0.001 to prevent overfitting. Optimization was performed with an initial learning rate of 0.001 using Adam ($\beta_1$=0.9, $\beta_2$=0.999). Importantly, $\hat h$ parameters were not reset after each round in order to give more importance to the points queried during the first rounds (more on that in section \ref{sec:curriculum}). The hyperparameters of WAR and IDEAL were tuned for each dataset using Grid Search.

Specifically for the WAR implementation, we also had to train $\varphi$. We used Adam (still with $\beta_1$=0.9, $\beta_2$=0.999) and an initial learning rate of 0.01. $\varphi$ belongs to $GS_2$ (grouping size of 2). We used a committee of 5 estimators to compute the uncertainty. The results presented below are the RMSE averaged over 5 repetitions.

\subsection{Results}
The error made by the different algorithms after querying $25\%$ of the dataset is presented in table \ref{tab:rmse25}. The table \ref{tab:area} presents the area under the learning curves which are displayed in figure \ref{courb:concrete_slump}. Only the learning curves of three of the five datasets are presented, the two others are postponed to supplementary material. 

\begin{table}[h]
\caption{RMSE when 25\% percent of the data is labeled }\label{tab:rmse25}
\begin{center}
\begin{tabular}{|c|ccccc|}
\hline
  & Bo & Ai & En & Ya & Co \\ \hline
Random & 4.33 & 9.66 & 2.69 & 3.95 & 6.58 \\ \hline
Disagreement & 4.45 & 11.47 & 2.83 & 4.53 & 6.87 \\ \hline
GSx & 4.30 & 16.26 & 2.60 & 2.17 & 7.33 \\ \hline
iGS & 4.12 & 9.06 & \textbf{2.57} & \textbf{2.10} & 7.00 \\ \hline
Euclidean & 6.03 & 19.68 & 3.41 & 4.22 & 8.08 \\ \hline
Cosine & 5.60 & 19.57 & 3.83 & 3.25 & 8.32 \\ \hline
iRDM & 4.28 & 10.21 & 2.67 & 4.33 & 7.35 \\ \hline
IDEAL & 4.54 & \textbf{7.53} & 2.62 & 3.36 & 7.50 \\ \hline
WAR & \textbf{3.63} & 8.67 & 2.65 & 2.71 & \textbf{6.15} \\ \hline
\end{tabular}
\end{center}
\end{table}

The WAR algorithm (in light green) is consistently demonstrating a relatively fast convergence, often having the lowest RMSE for a given number of data. It also tends to outperform the other methods eventually. The results can be interpreted as a consequence of the effort to mitigate the risks of querying some outliers during the early stages of the process, thanks to the penalization $\alpha$ inserted in the acquisition function. Indeed, these outliers could have created a bias during the training of $\hat h$, by getting an ill-deserved representation in the weights of the estimator ( remember that we do not reset $\hat h$ parameters after we finish a round).

Besides, one can notice that WAR is always performing better than the simple disagreement sampling method. This highly suggests that the query criterion based on the Wasserstein distance that completes the acquisition function increases substantially the converging speed. This also gives an easy way to increase the performance of the model: by adding more estimators to the committee, we get a better estimation of each of their predictions' standard deviations, and the final predictions can be refined by computing the mean or the median of each of their outputs. 

\begin{table}[h]
\caption{Area under each of the models' curve}\label{tab:area}
\begin{center}
\begin{tabular}{|c|ccccc|}
\hline \
  & Bo & Ai & En & Ya & Co \\ \hline
Random & 228 & 625 & 126 & 143 & 274 \\ \hline
Disagreement & 216 & 675 & 126 & 150 & 235 \\ \hline
GSx & 212 & 660 & 121 & \textbf{126} & 233 \\ \hline
iGS & 208 & 588 & 126 & 138 & 237 \\ \hline
Euclidean & 262 & 702 & 144 & 161 & 263 \\ \hline
Cosine & 265 & 754 & 172 & 142 & 268 \\ \hline
iRDM & 210 & 584 & 121 & 140 & 229 \\ \hline
IDEAL & 211 & \textbf{558} & 119 & 135 & 224 \\ \hline
WAR & \textbf{194} & 583 & \textbf{114} & 133 & \textbf{213} \\ \hline
\end{tabular}
\end{center}
\end{table}
This gives a sense of the general performance of each method. Here, WAR is always among the best models. Thanks to this approach, few outliers are queried which leads to a bias reduction and a better performance overall in the query procedure. 

\begin{figure}[h]\label{graphes}
\includegraphics[width=1\linewidth, height=5cm]{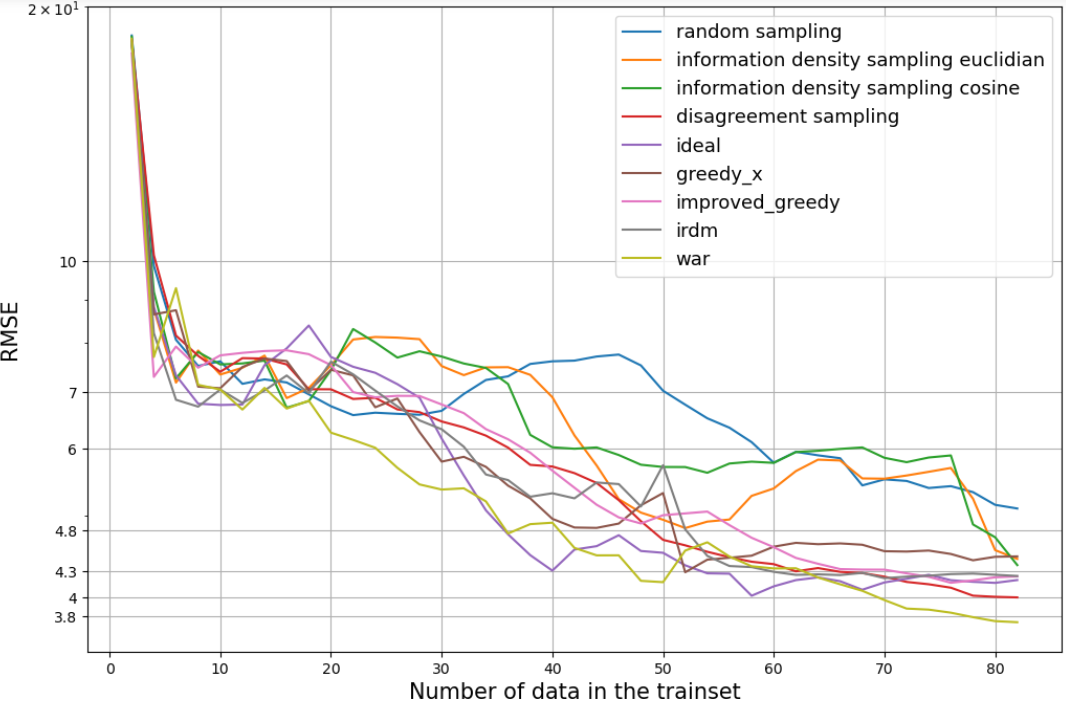}
\caption{ Evolution of the mean of each model RMSE when increasing the trainset size for Concrete Slump data set}\label{courb:concrete_slump}
\end{figure}

\subsection{A Link to curriculum learning}\label{sec:curriculum}

The formula $S(x)$ that we use to decide the next points to query can be seen as a way to quantify the amount of information provided by each data in $U$ to achieve the given task (approximating $h$). So it seems reasonable to give more weight to the points carrying the most information. In our case, the data that were queried during the first rounds contain arguably more information than the ones queried after. By not resetting $\hat h$ parameters after each round, the model will be trained more often on the points queried at the beginning. This approach can be linked to curriculum learning introduced by \cite{BLCW2009}, where available data are ranked from easy to hard and then train on them in that order. In active learning, we are not explicitly trying to rank data in this manner. However, the two fields share similar approaches to reach a good estimation of $h$ since the model is trained on a richer and more complex dataset after each iteration. We refer to the survey \cite{SIRS2022} for more information about curriculum learning.

\section{Conclusion and future works}\label{sec:conclu}

We proposed an active learning strategy for regression using probability distribution matching and uncertainty-based sampling and provided theoretical foundations. The model relies on three important aspects of data distributions in general (namely: uncertainty, diversity, and representativity) to evaluate the benefit of each point regarding our problem. This makes it more adjustable to every input space in a fashion that reduces the risk of querying outliers. The use of GroupSort neural networks has shown its relevance in such a setup thanks to their theoretical guarantees and good convergence properties. The empirical study highlights the efficiency of combining uncertainty-based approaches with the Wasserstein distance to select data and shows that keeping the weights of the estimator after each round can lead to better results. A strength of this methodology is its versatility. In fact, it can be applied with any learner $\hat h$ as long as this learner is Lipschitz continuous. 

For future works, we plan to study the impact of the query batch size ($n_B$) on the model convergence to improve our query strategy. We will also develop methods to estimate the best values of the hyperparameters $\beta$, and $\alpha$, according to the other parameters of the problem. Finally, we wish  to link our method to other curriculum learning approaches and see if some ideas of this domain could be extended to active learning.

\section{Additional experiments}

\subsection{Benchmarks details}

This section summarizes some details on the dataset and models used in the benchmark study. The table \ref{tab:hyppar} gives the values of the different models. Whereas the table \ref{tab:datsize} provides the size information about datasets, such as numbers features, entries or different subsamples size. 
\begin{figure}[h]
\begin{center}
\begin{tabular}{|c|ccccc|}
\hline
 & Boston & Airfoil & Energy1 & Yacht & Concrete \\ \hline
WAR - $\alpha$ & 2 & 0 & 0 & 1 & 1 \\ \hline
WAR - $\beta$ & 3 & 2.5 & 2.5 & 3 & 6 \\ \hline
iRDM - $c_{max}$  & 5 & 5 & 5 & 5 & 5 \\ \hline
IDEAL - $\omega$ & 3 & 0.5 & 1 & 1 & 1 \\ \hline
IDEAL - $\delta$ & 3 & 3 & 3 & 0.3 & 0.1 \\ \hline
\end{tabular}
\caption{Model hyperparameters}\label{tab:hyppar}
\end{center}
\end{figure}

The signification of other models' hyperparameters:
\begin{itemize}
\item$c_{max}$ = maximum number of iteration
\item$\delta$ = weight of the exploration factor
\item$\omega$= weight of the density measure
\end{itemize}

\begin{figure}[h]
\begin{center}
\begin{tabular}{|c|ccccc|}
\hline
 & Boston & Airfoil & Energy1 & Yacht & Concrete \\ \hline
number of features & 13 & 5 & 8 & 6 & 7 \\ \hline
trainset length & 404 & 1202 & 614 & 246 & 82 \\ \hline
testset length & 102 & 301 & 154 & 62 & 21 \\ \hline
initial labeled pool length & 8 & 24 & 12 & 5 & 2 \\ \hline
batch queried length & 8 & 24 & 12 & 5 & 2 \\ \hline

\end{tabular}
\caption{Dataset details}\label{tab:datsize}
\end{center}
\end{figure}

\subsection{Graphs}
The non-displayed learning curves in the core document (mentioned in section 4 can be found in this section. 
All figures show the same behavior as described in the core document, which comforts the mentioned conclusions.


\begin{figure}[h]
\begin{subfigure}{0.5\textwidth}
\includegraphics[width=0.9\linewidth, height=4cm]{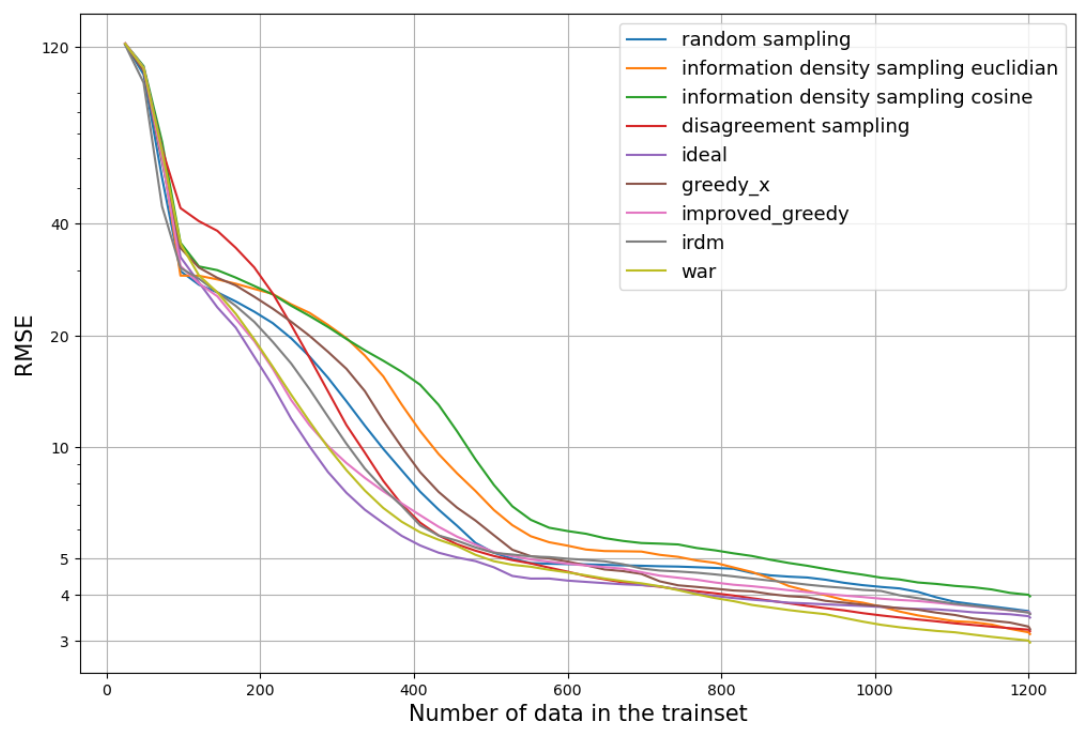} 
\caption{Airfoil Self-Noise}\label{courb:airfoil}
\end{subfigure}
\begin{subfigure}{0.5\textwidth}
\includegraphics[width=0.9\linewidth, height=4cm]{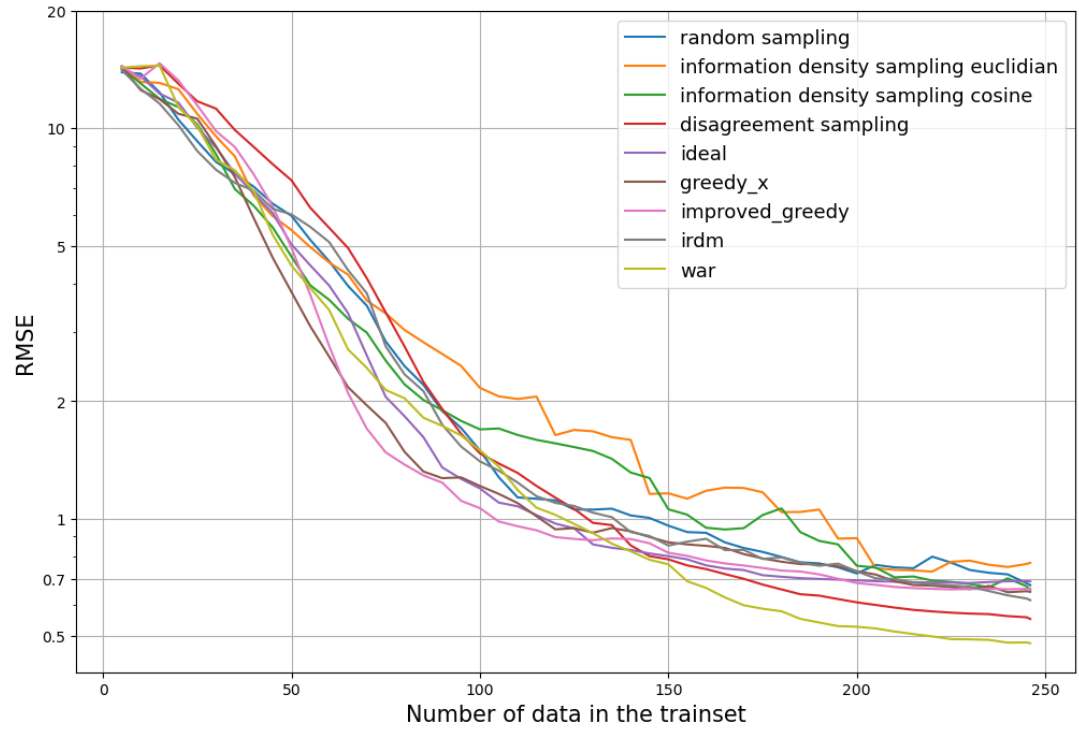}
\caption{Yacht Hydrodynamics}\label{courb:yacht}
\end{subfigure}
\begin{subfigure}{0.5\textwidth}
\includegraphics[width=0.9\linewidth, height=4cm]{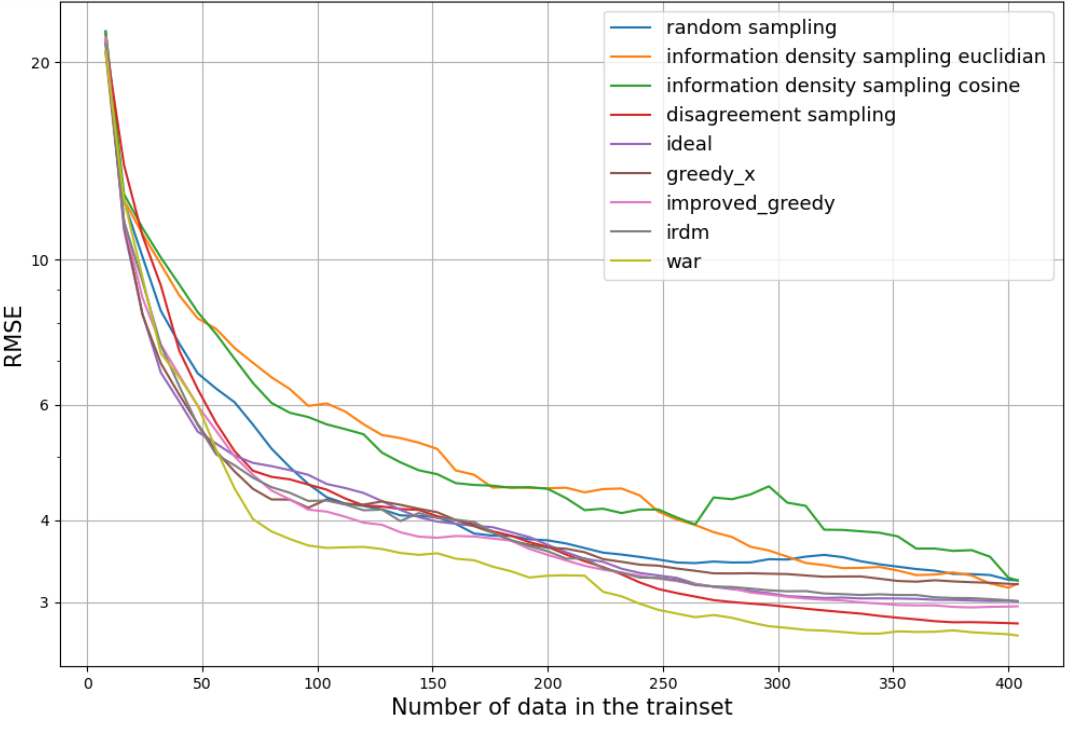} 
\caption{Boston Housing}\label{courb:airfoil}
\end{subfigure}
\begin{subfigure}{0.5\textwidth}
\includegraphics[width=0.9\linewidth, height=4cm]{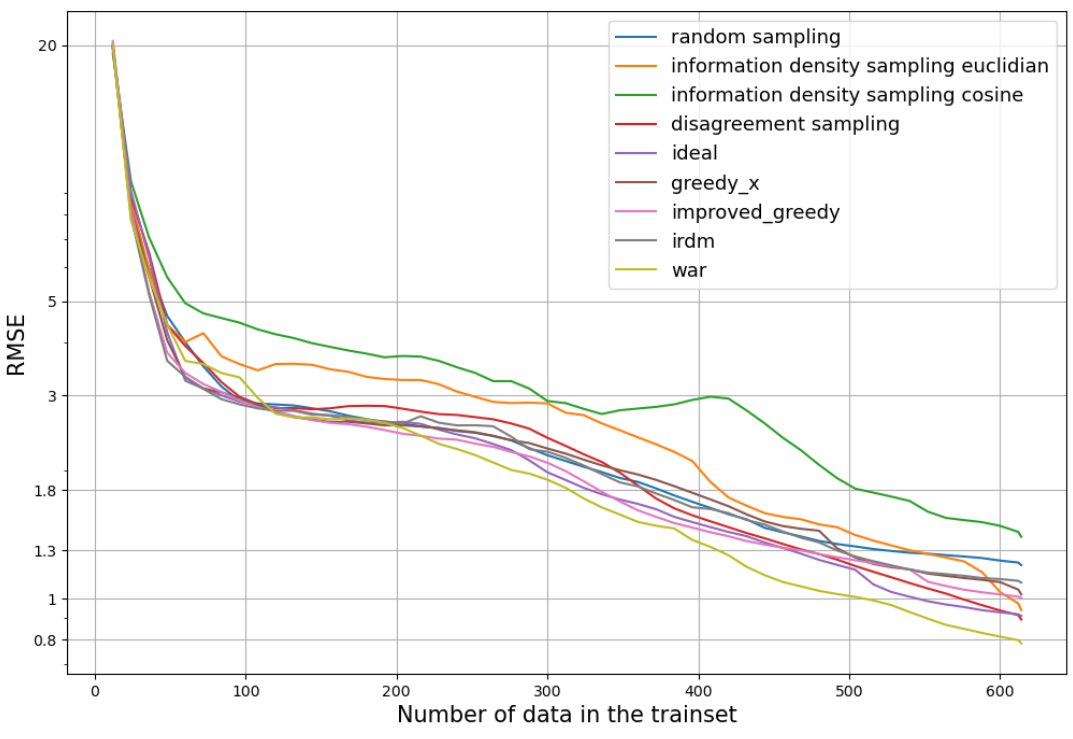}
\caption{Energy Efficiency}\label{courb:yacht}
\end{subfigure}
\caption{Evolution of the mean of each model RMSE when increasing
the trainset size}
\label{fig:image2}
\end{figure}
\newpage

\bibliography{biblio.bib}
\bibliographystyle{apalike}

\end{document}